\newcommand{\supplementary}{appendix}
\newcommand{\appendix


\section{Algorithm of dynamic mode decomposition}

Dynamic mode decomposition (DMD) was originally invented as a tool for inspecting fluid flows \cite{Rowley09,Schmid10}, and it has been utilized in various fields other than fluid dynamics. An output of DMD coincides with Koopman spectral analysis if we have $\bm{g}$ that spans a Koopman invariant subspace. The popular algorithm of DMD \cite{Tu14}, which is based on the singular value decomposition (SVD) of a data matrix, is defined as follows.
\begin{algorithm*}[DMD \cite{Tu14}]
\leavevmode
\renewcommand{\labelenumi}{(\arabic{enumi})}
\setlength{\leftmargini}{18pt}
\begin{enumerate}
\vspace*{-0.5em}\item Given a sequence of $\bm{g}(\bm{x})$, define data matrices $\bm{Y}_0 = \begin{bmatrix}\bm{g}(\bm{x}_0)&\cdots&\bm{g}(\bm{x}_{m-1})\end{bmatrix}$ and $\bm{Y}_1 = \begin{bmatrix}\bm{g}(\bm{f}(\bm{x}_0))&\cdots&\bm{g}(\bm{f}(\bm{x}_{m-1}))\end{bmatrix}$.
\vspace*{-0.5em}\item\label{alg:dmd:svd} Calculate the compact SVD of $\bm{Y}_0$ as $\bm{Y}_0=\bm{U}_r \bm{S}_r \bm{V}_r^\ct$, where $r$ is the rank of $\bm{Y}_0$.
\vspace*{-0.5em}\item Compute a matrix $\tilde{\bm{A}}=\bm{U}_r^\ct \bm{Y}_1 \bm{V}_r \bm{S}_r^{-1}$.
\vspace*{-0.5em}\item Calculate eigendecomposition of $\tilde{\bm{A}}$, i.e., compute $\tilde{\bm{w}}$ and $\lambda$ such that $\tilde{\bm{A}}\tilde{\bm{w}} = \lambda\tilde{\bm{w}}$.
\vspace*{-0.5em}\item In addition, calculate left-eigenvectors $\tilde{\bm{z}}$ of $\tilde{\bm{A}}$.
\vspace*{-0.5em}\item Back-project the eigenvectors to the original space by $\bm{w}=\lambda^{-1}\bm{Y}_1 \bm{V}_r \bm{S}_r^{-1}\tilde{\bm{w}}$ and $\bm{z}=\bm{U}_r\tilde{\bm{z}}$.
\vspace*{-0.5em}\item Normalize $\bm{w}$ and $\bm{z}$ such that $\bm{w}_i^\ct\bm{z}_j=\delta_{i,j}$, where $\delta_{i,j}=1$ if $i=j$ and $\delta_{i,j}=0$ otherwise.
\vspace*{-0.5em}\item Return {\em dynamic modes} $\bm{w}$ and corresponding eigenvalues $\lambda$. In addition, return the values of corresponding eigenfunctions by $\varphi=\bm{z}^\ct\bm{g}$.
\end{enumerate}
\end{algorithm*}
The eigenvalues computed using the algorithm above are ``discrete-time'' ones, in the sense that they represent frequencies and decay rates in terms of  discrete-time dynamical systems. The ``continuous-time'' counterparts can be computed easily by $\lambda_c=\log(\lambda)/\Delta t$, where $\Delta t$ is the time interval in the discrete-time setting.

Note that the definition above presumes the access to an appropriate observable $\bm{g}$. In contrast, in the proposed method, the above-mentioned algorithm is run {\em after} applying the proposed framework to learn $\bm{g}$ from observed data.


\section{Detailed experimental setup}

In this \supplementary~section, the configurations of the numerical examples and applications, which were omitted in the main text, are described.

\subsection{General settings}

\paragraph{Hyperparameters}
In each experiment, parameter $\alpha$ was fixed at $0.01$. Note that the quality of the results was not sensitive to the values of $\alpha$. We modeled $\bm{g}$ and $\bm{h}$ with multi-layer perceptrons by setting the number of hidden nodes (denoted $n_h$) as the arithmetic means of the input and output sizes, i.e., $n_h=\operatorname{round}((p+n)/2)$ for $\bm{g}$ and $n_h=\operatorname{round}((n+r)/2)$ for $\bm{h}$, where $r=\dim(\bm{y})$, $p=\dim(\tilde{\bm{x}})$, and $n=\dim(\bm{g})$. Therefore, the remaining hyperparameters to be tuned were $k$ (maximum lag), $p$, and $n$. However, unless otherwise noted, we fixed $p$ by $p=kr$. Consequently, the independent hyperparameters were $k$ and $n$.

\paragraph{Preprocessing}
One {\em must not subtract the mean} from the original data because subtracting something from the data may change the spectra of the underlying dynamical systems (see, e.g., \cite{Chen12}). If the absolute values of the data were too large, we simply divided the data by the maximum absolute value for each series.

\paragraph{Optimization}
In optimization, we found that the adaptive learning rate by SMORMS3 \cite{Funk15} achieved fast convergence compared to a fixed learning rate and other adaptation techniques. The maximum learning rate of SMORMS3 was selected from $10^{-3}$ to $10^{-2}$ in each experiment according to the amount of data. In some cases, optimization was performed in two stages: the parameters of $\bm\phi$, $\bm{g}$, and $\bm{h}$ were updated in the first stage, and, in the second stage, the parameters of $\bm\phi$ and $\bm{g}$ were fixed and only $\bm{h}$ was updated. This two-stage optimization was particularly useful for the application of prediction, where a precise reconstruction of the original measurements was necessary. Moreover, when the original states $\bm{x}$ of the dynamical system were available and used without delay (i.e., $k=1$ and $p=r$), parameter $\bm{W}_\phi$ of the linear embedder was fixed to be an identity matrix (i.e., no embedder was used). Also, we set the mini-batch size from 100 to 500 because smaller mini-batches often led to an unstable computation of pseudo-inverse.

\subsection{Fixed-point attractor experiment}

In the experiment using the fixed-point attractor, the data were generated with four initial values: $\begin{bmatrix}5&5\end{bmatrix}^\tr$, $\begin{bmatrix}-5&5\end{bmatrix}^\tr$, $\begin{bmatrix}5&-5\end{bmatrix}^\tr$, and $\begin{bmatrix}-5&5\end{bmatrix}^\tr$, with the length of each episode being $30$. In the case of noisy dataset, the standard deviation of the observation noise was set to $0.1$.
In both experiments (with and without observation noise), we set $k=2$ and $n=4$ to cover the minimal three-dimensional Koopman invariant subspace.

\subsection{Limit-cycle attractor experiment}

The data were generated using MATLAB's \texttt{ode45} function \cite{Shampine97}, which was run with time-step $\Delta t=0.1$ and initial value $\bm{x}_0=\begin{bmatrix}1&1.6\end{bmatrix}^\tr$ for 2,000 steps. The hyperparameters of LKIS-DMD, linear Hankel DMD, and kernel DMDs were set such that they produced 16 eigenvalues, i.e., $k=8$ and $n=16$ for LKIS-DMD, and POD modes whose singular value was less than $\varepsilon$ were disposed in kernel DMDs ($\varepsilon=0.0001$ for the polynomial kernel and $\varepsilon=0.05$ for the RBF kernel).

\subsection{Multiple basins of attraction experiment}

The data were generated using the settings provided in the literature \cite{Williams15a}; 1,000 initial values were drawn from the uniform distribution on $[-2,2]\times[-2,2]$ and each initial value was proceeded in time for 11 steps with $\Delta t=0.25$. We used MATLAB's \texttt{ode45} function for numerical integration. For LKIS-DMD, we set $k=1$ and $n=100$. Note that the values of the estimated eigenfunction were evaluated and plotted in consideration of each data point.

\subsection{Chaotic time-series prediction experiment}

The data were generated from the Lorenz attractor \cite{Lorenz63} (parameters $\beta=\nicefrac83$, $\sigma=10$, and $\rho=28$) and the Rossler attractor \cite{Rossler76} (parameters $a=0.2$, $b=0.2$, and $c=5.7$). We generated 25,000 steps for each attractor and divided them into training, validation, and test sets. For all methods, the delay dimension was fixed at $7$, i.e., $k=7$ for LKIS-DMD and linear Hankel DMD, and backpropagation was truncated to length $7$ to learn the LSTM network. We tuned $n$ of LKIS-DMD and the dimensionality of LSTM's hidden state (denoted $n_h$) according to the 30-step prediction accuracies obtained using the validation set. Here, we obtained $n=5$ and $n_h=5$ for the Lorenz data and $n=6$ and $n_h=3$ for the Rossler data.

In this experiment, LSTM was applied because it had been utilized for various nonlinear time-series, and Hankel DMD was used because it had been successfully utilized for analysis of chaotic systems \cite{Brunton17}.

\subsection{Unstable phenomena detection experiment}

The dataset was obtained from the Santa Fe Time Series Competition Data \cite{SantaFeBook}. Note that the author's \cite{SantaFeBook} original web page was not available on the date of submission of this manuscript (May 2017); however, the dataset itself was still available online. The length of delay (or sliding window) was fixed to $10$ for all methods applied in this experiment. In addition, no intensive tuning of the other hyperparameters was conduct because the purpose was qualitative. The default settings of \texttt{libsvm} \cite{Chang11} were used for the one-class SVM (except for $\nu=0.05$). For the density-ratio estimation by RuLSIF, the default values of the implementation by the authors of \cite{Liu13} were used.

In this experiment, OC-SVM was applied because it was a kind of {\em de facto} standard for novelty/change-point detection, and RuLSIF ws used because it had achieved the best performance among methods based on density-ratio estimation \cite{Liu13}.}{\appendix


\section{Algorithm of dynamic mode decomposition}

Dynamic mode decomposition (DMD) was originally invented as a tool for inspecting fluid flows \cite{Rowley09,Schmid10}, and it has been utilized in various fields other than fluid dynamics. An output of DMD coincides with Koopman spectral analysis if we have $\bm{g}$ that spans a Koopman invariant subspace. The popular algorithm of DMD \cite{Tu14}, which is based on the singular value decomposition (SVD) of a data matrix, is defined as follows.
\begin{algorithm*}[DMD \cite{Tu14}]
\leavevmode
\renewcommand{\labelenumi}{(\arabic{enumi})}
\setlength{\leftmargini}{18pt}
\begin{enumerate}
\vspace*{-0.5em}\item Given a sequence of $\bm{g}(\bm{x})$, define data matrices $\bm{Y}_0 = \begin{bmatrix}\bm{g}(\bm{x}_0)&\cdots&\bm{g}(\bm{x}_{m-1})\end{bmatrix}$ and $\bm{Y}_1 = \begin{bmatrix}\bm{g}(\bm{f}(\bm{x}_0))&\cdots&\bm{g}(\bm{f}(\bm{x}_{m-1}))\end{bmatrix}$.
\vspace*{-0.5em}\item\label{alg:dmd:svd} Calculate the compact SVD of $\bm{Y}_0$ as $\bm{Y}_0=\bm{U}_r \bm{S}_r \bm{V}_r^\ct$, where $r$ is the rank of $\bm{Y}_0$.
\vspace*{-0.5em}\item Compute a matrix $\tilde{\bm{A}}=\bm{U}_r^\ct \bm{Y}_1 \bm{V}_r \bm{S}_r^{-1}$.
\vspace*{-0.5em}\item Calculate eigendecomposition of $\tilde{\bm{A}}$, i.e., compute $\tilde{\bm{w}}$ and $\lambda$ such that $\tilde{\bm{A}}\tilde{\bm{w}} = \lambda\tilde{\bm{w}}$.
\vspace*{-0.5em}\item In addition, calculate left-eigenvectors $\tilde{\bm{z}}$ of $\tilde{\bm{A}}$.
\vspace*{-0.5em}\item Back-project the eigenvectors to the original space by $\bm{w}=\lambda^{-1}\bm{Y}_1 \bm{V}_r \bm{S}_r^{-1}\tilde{\bm{w}}$ and $\bm{z}=\bm{U}_r\tilde{\bm{z}}$.
\vspace*{-0.5em}\item Normalize $\bm{w}$ and $\bm{z}$ such that $\bm{w}_i^\ct\bm{z}_j=\delta_{i,j}$, where $\delta_{i,j}=1$ if $i=j$ and $\delta_{i,j}=0$ otherwise.
\vspace*{-0.5em}\item Return {\em dynamic modes} $\bm{w}$ and corresponding eigenvalues $\lambda$. In addition, return the values of corresponding eigenfunctions by $\varphi=\bm{z}^\ct\bm{g}$.
\end{enumerate}
\end{algorithm*}
The eigenvalues computed using the algorithm above are ``discrete-time'' ones, in the sense that they represent frequencies and decay rates in terms of  discrete-time dynamical systems. The ``continuous-time'' counterparts can be computed easily by $\lambda_c=\log(\lambda)/\Delta t$, where $\Delta t$ is the time interval in the discrete-time setting.

Note that the definition above presumes the access to an appropriate observable $\bm{g}$. In contrast, in the proposed method, the above-mentioned algorithm is run {\em after} applying the proposed framework to learn $\bm{g}$ from observed data.


\section{Detailed experimental setup}

In this \supplementary~section, the configurations of the numerical examples and applications, which were omitted in the main text, are described.

\subsection{General settings}

\paragraph{Hyperparameters}
In each experiment, parameter $\alpha$ was fixed at $0.01$. Note that the quality of the results was not sensitive to the values of $\alpha$. We modeled $\bm{g}$ and $\bm{h}$ with multi-layer perceptrons by setting the number of hidden nodes (denoted $n_h$) as the arithmetic means of the input and output sizes, i.e., $n_h=\operatorname{round}((p+n)/2)$ for $\bm{g}$ and $n_h=\operatorname{round}((n+r)/2)$ for $\bm{h}$, where $r=\dim(\bm{y})$, $p=\dim(\tilde{\bm{x}})$, and $n=\dim(\bm{g})$. Therefore, the remaining hyperparameters to be tuned were $k$ (maximum lag), $p$, and $n$. However, unless otherwise noted, we fixed $p$ by $p=kr$. Consequently, the independent hyperparameters were $k$ and $n$.

\paragraph{Preprocessing}
One {\em must not subtract the mean} from the original data because subtracting something from the data may change the spectra of the underlying dynamical systems (see, e.g., \cite{Chen12}). If the absolute values of the data were too large, we simply divided the data by the maximum absolute value for each series.

\paragraph{Optimization}
In optimization, we found that the adaptive learning rate by SMORMS3 \cite{Funk15} achieved fast convergence compared to a fixed learning rate and other adaptation techniques. The maximum learning rate of SMORMS3 was selected from $10^{-3}$ to $10^{-2}$ in each experiment according to the amount of data. In some cases, optimization was performed in two stages: the parameters of $\bm\phi$, $\bm{g}$, and $\bm{h}$ were updated in the first stage, and, in the second stage, the parameters of $\bm\phi$ and $\bm{g}$ were fixed and only $\bm{h}$ was updated. This two-stage optimization was particularly useful for the application of prediction, where a precise reconstruction of the original measurements was necessary. Moreover, when the original states $\bm{x}$ of the dynamical system were available and used without delay (i.e., $k=1$ and $p=r$), parameter $\bm{W}_\phi$ of the linear embedder was fixed to be an identity matrix (i.e., no embedder was used). Also, we set the mini-batch size from 100 to 500 because smaller mini-batches often led to an unstable computation of pseudo-inverse.

\subsection{Fixed-point attractor experiment}

In the experiment using the fixed-point attractor, the data were generated with four initial values: $\begin{bmatrix}5&5\end{bmatrix}^\tr$, $\begin{bmatrix}-5&5\end{bmatrix}^\tr$, $\begin{bmatrix}5&-5\end{bmatrix}^\tr$, and $\begin{bmatrix}-5&5\end{bmatrix}^\tr$, with the length of each episode being $30$. In the case of noisy dataset, the standard deviation of the observation noise was set to $0.1$.
In both experiments (with and without observation noise), we set $k=2$ and $n=4$ to cover the minimal three-dimensional Koopman invariant subspace.

\subsection{Limit-cycle attractor experiment}

The data were generated using MATLAB's \texttt{ode45} function \cite{Shampine97}, which was run with time-step $\Delta t=0.1$ and initial value $\bm{x}_0=\begin{bmatrix}1&1.6\end{bmatrix}^\tr$ for 2,000 steps. The hyperparameters of LKIS-DMD, linear Hankel DMD, and kernel DMDs were set such that they produced 16 eigenvalues, i.e., $k=8$ and $n=16$ for LKIS-DMD, and POD modes whose singular value was less than $\varepsilon$ were disposed in kernel DMDs ($\varepsilon=0.0001$ for the polynomial kernel and $\varepsilon=0.05$ for the RBF kernel).

\subsection{Multiple basins of attraction experiment}

The data were generated using the settings provided in the literature \cite{Williams15a}; 1,000 initial values were drawn from the uniform distribution on $[-2,2]\times[-2,2]$ and each initial value was proceeded in time for 11 steps with $\Delta t=0.25$. We used MATLAB's \texttt{ode45} function for numerical integration. For LKIS-DMD, we set $k=1$ and $n=100$. Note that the values of the estimated eigenfunction were evaluated and plotted in consideration of each data point.

\subsection{Chaotic time-series prediction experiment}

The data were generated from the Lorenz attractor \cite{Lorenz63} (parameters $\beta=\nicefrac83$, $\sigma=10$, and $\rho=28$) and the Rossler attractor \cite{Rossler76} (parameters $a=0.2$, $b=0.2$, and $c=5.7$). We generated 25,000 steps for each attractor and divided them into training, validation, and test sets. For all methods, the delay dimension was fixed at $7$, i.e., $k=7$ for LKIS-DMD and linear Hankel DMD, and backpropagation was truncated to length $7$ to learn the LSTM network. We tuned $n$ of LKIS-DMD and the dimensionality of LSTM's hidden state (denoted $n_h$) according to the 30-step prediction accuracies obtained using the validation set. Here, we obtained $n=5$ and $n_h=5$ for the Lorenz data and $n=6$ and $n_h=3$ for the Rossler data.

In this experiment, LSTM was applied because it had been utilized for various nonlinear time-series, and Hankel DMD was used because it had been successfully utilized for analysis of chaotic systems \cite{Brunton17}.

\subsection{Unstable phenomena detection experiment}

The dataset was obtained from the Santa Fe Time Series Competition Data \cite{SantaFeBook}. Note that the author's \cite{SantaFeBook} original web page was not available on the date of submission of this manuscript (May 2017); however, the dataset itself was still available online. The length of delay (or sliding window) was fixed to $10$ for all methods applied in this experiment. In addition, no intensive tuning of the other hyperparameters was conduct because the purpose was qualitative. The default settings of \texttt{libsvm} \cite{Chang11} were used for the one-class SVM (except for $\nu=0.05$). For the density-ratio estimation by RuLSIF, the default values of the implementation by the authors of \cite{Liu13} were used.

In this experiment, OC-SVM was applied because it was a kind of {\em de facto} standard for novelty/change-point detection, and RuLSIF ws used because it had achieved the best performance among methods based on density-ratio estimation \cite{Liu13}.}
\newcommand{\figvspace}{}
\newcommand{\ct}{\mathsf{H}}
\newcommand{\tr}{\mathsf{T}}
\newcommand{\etal}{{\em et~al.}}
\theoremstyle{plain}
\newtheorem{theorem}{Theorem}
\theoremstyle{definition}
\newtheorem*{algorithm*}{Algorithm}
\newtheorem{assumption}{Assumption}
\theoremstyle{remark}
\newtheorem*{remark*}{Remark}
\title{Learning Koopman Invariant Subspaces\\for Dynamic Mode Decomposition\thanks{This is a preprint of the following conference paper: N. Takeishi, Y. Kawahara, and T. Yairi. Learning Koopman Invariant Subspaces for Dynamic Mode Decomposition. In {\em Advances in Neural Information Processing Systems}, volume 30, 2017. (to appear)}}
\author{
  Naoya Takeishi$^{\S}$, Yoshinobu Kawahara$^{\dagger,\ddagger}$, Takehisa Yairi$^{\S}$ \\
  $^{\S}$Department of Aeronautics and Astronautics, The University of Tokyo\\
  $^{\dagger}$The Institute of Scientific and Industrial Research, Osaka University\\
  $^{\ddagger}$RIKEN Center for Advanced Intelligence Project
  \\ \texttt{\{takeishi,yairi\}@ailab.t.u-tokyo.ac.jp, ykawahara@sanken.osaka-u.ac.jp}
}
\begin{document}

\maketitle

\begin{abstract}
Spectral decomposition of the Koopman operator is attracting attention as a tool for the analysis of nonlinear dynamical systems. Dynamic mode decomposition is a popular numerical algorithm for Koopman spectral analysis; however, we often need to prepare nonlinear observables manually according to the underlying dynamics, which is not always possible since we may not have any {\em a priori} knowledge about them. In this paper, we propose a fully data-driven method for Koopman spectral analysis based on the principle of {\em learning Koopman invariant subspaces} from observed data. To this end, we propose minimization of the residual sum of squares of linear least-squares regression to estimate a set of functions that transforms data into a form in which the linear regression fits well. We introduce an implementation with neural networks and evaluate performance empirically using nonlinear dynamical systems and applications.
\end{abstract}


\section{Introduction}\label{sec:introduction}

A variety of time-series data are generated from {nonlinear dynamical systems}, in which a state evolves according to a nonlinear map or differential equation. In summarization, regression, or classification of such time-series data, precise analysis of the underlying dynamical systems provides valuable information to generate appropriate features and to select an appropriate computation method. In applied mathematics and physics, the analysis of nonlinear dynamical systems has received significant interest because a wide range of complex phenomena, such as fluid flows and neural signals, can be described in terms of nonlinear dynamics. A classical but popular view of dynamical systems is based on {state space models}, wherein the behavior of the trajectories of a vector in state space is discussed (see, e.g., \cite{HirschBook}). Time-series modeling based on a state space is also common in machine learning. However, when the dynamics are highly nonlinear, analysis based on state space models becomes challenging compared to the case of linear dynamics.

Recently, there is growing interest in operator-theoretic approaches for the analysis of dynamical systems. Operator-theoretic approaches are based on the Perron--Frobenius operator \cite{Lasota94} or its adjoint, i.e., the Koopman operator (composition operator) \cite{Koopman31,Mezic05}. The Koopman operator defines the evolution of observation functions (observables) in a function space rather than state vectors in a state space. Based on the Koopman operator, the analysis of nonlinear dynamical systems can be lifted to a linear (but infinite-dimensional) regime. Consequently, we can consider modal decomposition, with which the global characteristics of nonlinear dynamics can be inspected \cite{Mezic05,Budisic12}. Such modal decomposition has been intensively used for scientific purposes to understand complex phenomena (e.g., \cite{Rowley09,Schmid10,Proctor15,Brunton16b}) and also for engineering tasks, such as signal processing and machine learning. In fact, modal decomposition based on the Koopman operator has been utilized in various engineering tasks, including robotic control \cite{Berger15}, image processing \cite{Kutz16}, and nonlinear system identification \cite{Mauroy16}.

One of the most popular algorithms for modal decomposition based on the Koopman operator is dynamic mode decomposition (DMD) \cite{Rowley09,Schmid10,DMDBook}. An important premise of DMD is that the target dataset is generated from a set of observables that spans a function space invariant to the Koopman operator (referred to as Koopman invariant subspace). However, when only the original state vectors are available as the dataset, we must prepare appropriate observables {\em manually} according to the underlying nonlinear dynamics. Several methods have been proposed to utilize such observables, including the use of basis functions \cite{Williams15a} and reproducing kernels \cite{Kawahara16}. Note that these methods work well only if appropriate basis functions or kernels are prepared; however, it is not always possible to prepare such functions if we have no {\em a priori} knowledge about the underlying dynamics.

In this paper, we propose a {\em fully data-driven} method for modal decomposition via the Koopman operator based on the principle of {\em learning Koopman invariant subspaces} (LKIS) from scratch using observed data. To this end, we estimate a set of parametric functions by minimizing the residual sum of squares (RSS) of linear least-squares regression, so that the estimated set of functions transforms the original data into a form in which the linear regression fits well. In addition to the principle of LKIS, an implementation using neural networks is described. Moreover, we introduce empirical performance of DMD based on the LKIS framework with several nonlinear dynamical systems and applications, which proves the feasibility of LKIS-based DMD as a fully data-driven method for modal decomposition via the Koopman operator.


\section{Background}\label{sec:background}


\subsection{Koopman spectral analysis}\label{subsec:koopman}

We focus on a (possibly nonlinear) discrete-time autonomous dynamical system
\begin{equation}\label{eq:dynamics}
\begin{aligned}
\bm{x}_{t+1} &= \bm{f}(\bm{x}_t),\quad
\bm{x}\in\mathcal{M},\quad
t\in\mathbb{T}=\{0\}\cup\mathbb{N},
\end{aligned}
\end{equation}
where $\mathcal{M}$ denotes the state space and $(\mathcal{M},\Sigma,\mu)$ represents the associated probability space. In dynamical system~\eqref{eq:dynamics}, {\em Koopman operator} $\mathcal{K}$ \cite{Mezic05,Budisic12} is defined as an infinite-dimensional linear operator that acts on {\em observables} $g:\mathcal{M}\to\mathbb{R}$ (or $\mathbb{C}$), i.e.,
\begin{equation}\label{eq:koopman}
\mathcal{K}g(\bm{x}) = g(\bm{f}(\bm{x})),
\end{equation}
with which the analysis of nonlinear dynamics~\eqref{eq:dynamics} can be lifted to a linear (but infinite-dimensional) regime. Since $\mathcal{K}$ is linear, let us consider a set of eigenfunctions $\{\varphi_1,\varphi_2,\dots\}$ of $\mathcal{K}$ with eigenvalues $\{\lambda_1,\lambda_2,\dots\}$, i.e., $\mathcal{K}\varphi_i=\lambda_i\varphi_i$ for $i\in\mathbb{N}$, where $\varphi:\mathcal{M}\to\mathbb{C}$ and $\lambda\in\mathbb{C}$. Further, suppose that $g$ can be expressed as a linear combination of those infinite number of eigenfunctions, i.e., $g(\bm{x})=\sum_{i=1}^\infty\varphi_i(\bm{x})c_i$ with a set of coefficients $\{c_1,c_2,\dots\}$. By repeatedly applying $\mathcal{K}$ to both sides of this equation, we obtain the following modal decomposition:
\begin{equation}\label{eq:decomposition}
g(\bm{x}_t)=\sum_{i=1}^\infty\lambda_i^t\varphi_i(\bm{x}_0)c_i.
\end{equation}
Here, the value of $g$ is decomposed into a sum of {\em Koopman modes} $w_i=\varphi_i(\bm{x}_0)c_i$, each of which evolves over time with its frequency and decay rate respectively given by $\angle\lambda_i$ and $\vert\lambda_i\vert$, since $\lambda_i$ is a complex value. The Koopman modes and their eigenvalues can be investigated to understand the dominant characteristics of complex phenomena that follow nonlinear dynamics. The above discussion can also be applied straightforwardly to continuous-time dynamical systems \cite{Mezic05,Budisic12}.

Modal decomposition based on $\mathcal{K}$, often referred to as Koopman spectral analysis, has been receiving attention in nonlinear physics and applied mathematics. In addition, it is a useful tool for engineering tasks including machine learning and pattern recognition; the spectra (eigenvalues) of $\mathcal{K}$ can be used as features of dynamical systems, the eigenfunctions are a useful representation of time-series for various tasks, such as regression and visualization, and $\mathcal{K}$ itself can be used for prediction and optimal control.
Several methods have been proposed to compute modal decomposition based on $\mathcal{K}$, such as generalized Laplace analysis \cite{Budisic12,Mezic13}, the Ulam--Galerkin method \cite{Froyland13}, and DMD \cite{Rowley09,Schmid10,DMDBook}. DMD, which is reviewed in more detail in the next subsection, has received significant attention and been utilized in various data analysis scenarios (e.g., \cite{Rowley09,Schmid10,Proctor15,Brunton16b}).

Note that the Koopman operator and modal decomposition based on it can be extended to random dynamical systems actuated by process noise \cite{Mezic05,Williams15a,Takeishi17b}. In addition, Proctor~\etal~\cite{Proctor16a,Proctor16b} discussed Koopman analysis of systems with control signals. In this paper, we primarily target autonomous deterministic dynamics (e.g., Eq.~\eqref{eq:dynamics}) for the sake of presentation clarity.


\subsection{Dynamic mode decomposition and Koopman invariant subspace}\label{subsec:dmd}

Let us review DMD, an algorithm for Koopman spectral analysis (further details are in the \supplementary). Consider a set of observables $\{g_1,\dots,g_n\}$ and let $\bm{g}=\begin{bmatrix}g_1&\cdots&g_n\end{bmatrix}^\tr$ be a vector-valued observable. In addition, define two matrices $\bm{Y}_0,\bm{Y}_1\in\mathbb{R}^{n \times m}$ generated by $\bm{x}_0$, $\bm{f}$ and $\bm{g}$, i.e.,
\begin{equation}\label{eq:data}
\bm{Y}_0 = \begin{bmatrix}\bm{g}(\bm{x}_0)&\cdots&\bm{g}(\bm{x}_{m-1})\end{bmatrix}
\quad\text{and}\quad
\bm{Y}_1 = \begin{bmatrix}\bm{g}(\bm{f}(\bm{x}_0))&\cdots&\bm{g}(\bm{f}(\bm{x}_{m-1}))\end{bmatrix},
\end{equation}
where $m+1$ is the number of snapshots in the dataset. The core functionality of DMD algorithms is computing the eigendecomposition of matrix $\bm{A}=\bm{Y}_1\bm{Y}_0^\dagger$ \cite{Tu14,DMDBook}, where $\bm{Y}_0^\dagger$ is the Moore--Penrose pseudoinverse of $\bm{Y}_0$. The eigenvectors of $\bm{A}$ are referred to as {\em dynamic modes}, and they coincide with the Koopman modes if the corresponding eigenfunctions of $\mathcal{K}$ are in $\operatorname{span}\{g_1,\dots,g_n\}$ \cite{Tu14}. Alternatively (but nearly equivalently), the condition under which DMD works as a numerical realization of Koopman spectral analysis can be described as follows.

Rather than calculating the infinite-dimensional $\mathcal{K}$ directly, we can consider the restriction of $\mathcal{K}$ to a finite-dimensional subspace. Assume the observables are elements of $L^2(\mathcal{M},\mu)$. The {\em Koopman invariant subspace} is defined as $\mathcal{G} \subset L^2(\mathcal{M},\mu)$ s.t. $\forall g \in \mathcal{G},~\mathcal{K}g \in \mathcal{G}$. If $\mathcal{G}$ is spanned by a finite number of functions, then the restriction of $\mathcal{K}$ to $\mathcal{G}$, which we denote $K$, becomes a {\em finite-dimensional} linear operator. In the sequel, we assume the existence of such $\mathcal{G}$. If $\{g_1,\dots,g_n\}$ spans $\mathcal{G}$, then DMD's matrix $\bm{A}=\bm{Y}_1\bm{Y}_0^\dagger$ coincides with $\bm{K}\in\mathbb{R}^{n \times n}$ asymptotically, wherein $\bm{K}$ is the realization of $K$ with regard to the frame (or basis) $\{g_1,\dots,g_n\}$. For modal decomposition~\eqref{eq:decomposition}, the (vector-valued) Koopman modes are given by $\bm{w}$ and the values of the eigenfunctions are obtained by $\varphi=\bm{z}^\ct\bm{g}$, where $\bm{w}$ and $\bm{z}$ are the right- and left-eigenvectors of $\bm{K}$ normalized such that $\bm{w}_i^\ct\bm{z}_j=\delta_{i,j}$ \cite{Tu14,Williams15a}, and $\bm{z}^\ct$ denotes the conjugate transpose of $\bm{z}$.

Here, an important problem in the practice of DMD arises, i.e., we often have no access to $\bm{g}$ that spans a Koopman invariant subspace $\mathcal{G}$. In this case, for nonlinear dynamics, we must manually prepare adequate observables. Several researchers have addressed this issue; Williams~\etal~\cite{Williams15a} leveraged a dictionary of predefined basis functions to transform original data, and Kawahara~\cite{Kawahara16} defined Koopman spectral analysis in a reproducing kernel Hilbert space. Brunton~\etal~\cite{Brunton16a} proposed the use of observables selected in a data-driven manner \cite{Brunton16c} from a function dictionary. Note that, for these methods, we must select an appropriate function dictionary or kernel function according to the target dynamics. However, if we have no {\em a priori} knowledge about them, which is often the case, such existing methods do not have to be applied successfully to nonlinear dynamics.


\section{Learning Koopman invariant subspaces}\label{sec:main}


\subsection{Minimizing residual sum of squares of linear least-squares regression}\label{subsec:rss}

In this paper, we propose a method to learn a set of observables $\{g_1,\dots,g_n\}$ that spans a Koopman invariant subspace $\mathcal{G}$, given a sequence of measurements as the dataset. In the following, we summarize desirable properties for such observables, upon which the proposed method is constructed.

\begin{theorem}\label{thm:main}
Consider a set of square-integrable observables $\{g_1,\dots,g_n\}$, and define a vector-valued observable $\bm{g}=\begin{bmatrix}g_1&\cdots&g_n\end{bmatrix}^\tr$. In addition, define a linear operator $G$ whose matrix form is given as $\bm{G}=\left(\int_\mathcal{M}(\bm{g}\circ\bm{f})\bm{g}^\ct\mathrm{d}\mu\right)\left(\int_\mathcal{M}\bm{g}\bm{g}^\ct\mathrm{d}\mu\right)^\dagger$. Then, $\forall\bm{x}\in\mathcal{M},~\bm{g}(\bm{f}(\bm{x}))=G\bm{g}(\bm{x})$ if and only if $\{g_1,\dots,g_n\}$ spans a Koopman invariant subspace.
\end{theorem}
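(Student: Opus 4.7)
The plan is to prove the two implications separately, with the $(\Leftarrow)$ direction being the substantive one.

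For the easy $(\Rightarrow)$ direction, I would start from the pointwise identity $\bm{g}\circ\bm{f}=\bm{G}\bm{g}$ written coordinatewise: $g_i\circ\bm{f}=\sum_j G_{ij}g_j$ for each $i$. By the definition of the Koopman operator, this says $\mathcal{K}g_i\in\operatorname{span}\{g_1,\dots,g_n\}$, so that subspace is Koopman-invariant.

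For the $(\Leftarrow)$ direction, I would start from the assumption that $\mathcal{G}=\operatorname{span}\{g_1,\dots,g_n\}$ is Koopman-invariant. Since $\mathcal{K}g_i\in\mathcal{G}$, there exists a finite matrix $\bm{K}\in\mathbb{R}^{n\times n}$ (the matrix of the restriction $K=\mathcal{K}|_\mathcal{G}$ in the frame $\{g_j\}$) such that $\bm{g}(\bm{f}(\bm{x}))=\bm{K}\bm{g}(\bm{x})$ pointwise. Right-multiplying this identity by $\bm{g}(\bm{x})^\ct$ and integrating against $\mu$ gives
\begin{equation*}
\int_\mathcal{M}(\bm{g}\circ\bm{f})\bm{g}^\ct\,\mathrm{d}\mu=\bm{K}\int_\mathcal{M}\bm{g}\bm{g}^\ct\,\mathrm{d}\mu.
\end{equation*}
Writing $\bm{M}$ and $\bm{C}$ for the two integrals, this says $\bm{M}=\bm{K}\bm{C}$, hence $\bm{G}=\bm{M}\bm{C}^\dagger=\bm{K}\bm{C}\bm{C}^\dagger$. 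So the claim $\bm{g}\circ\bm{f}=\bm{G}\bm{g}$ reduces to showing $\bm{K}\bm{C}\bm{C}^\dagger\bm{g}(\bm{x})=\bm{K}\bm{g}(\bm{x})$, i.e., that $\bm{C}\bm{C}^\dagger$ acts as the identity on $\bm{g}(\bm{x})$.

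The main obstacle is exactly this last step, because $\bm{C}$ need not be invertible (the frame $\{g_j\}$ might be linearly dependent in $L^2(\mathcal{M},\mu)$), so $\bm{C}\bm{C}^\dagger$ is only the orthogonal projector onto $\operatorname{range}(\bm{C})$. My plan is to dispose of this by the Gram-matrix argument: because $\bm{C}$ is Hermitian positive semidefinite, $(\ker\bm{C})^\perp=\operatorname{range}(\bm{C})$, and for any $\bm{v}\in\ker\bm{C}$ one has $\bm{v}^\ct\bm{C}\bm{v}=\int_\mathcal{M}|\bm{v}^\ct\bm{g}|^2\,\mathrm{d}\mu=0$, which forces $\bm{v}^\ct\bm{g}(\bm{x})=0$ for $\mu$-a.e.\ $\bm{x}$. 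Hence $\bm{g}(\bm{x})\in\operatorname{range}(\bm{C})$ and thus $\bm{C}\bm{C}^\dagger\bm{g}(\bm{x})=\bm{g}(\bm{x})$ almost everywhere, giving $\bm{G}\bm{g}(\bm{x})=\bm{K}\bm{g}(\bm{x})=\bm{g}(\bm{f}(\bm{x}))$. The universal quantifier in the theorem statement should be read modulo the $\mu$-null set on which the $L^2$ representatives are undetermined, which is the natural convention in this $L^2(\mathcal{M},\mu)$ setting; under continuity of $\bm{g}$ and full support of $\mu$ it upgrades to a literal pointwise equality.
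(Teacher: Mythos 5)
Your proposal is correct and follows essentially the same route as the paper: the $(\Rightarrow)$ direction is identical, and the $(\Leftarrow)$ direction likewise passes from the existence of $\bm{K}$ to $\int_\mathcal{M}(\bm{g}\circ\bm{f})\bm{g}^\ct\,\mathrm{d}\mu=\bm{K}\int_\mathcal{M}\bm{g}\bm{g}^\ct\,\mathrm{d}\mu$ and then to $\bm{G}$ via the pseudoinverse. Your Gram-matrix argument showing $\bm{C}\bm{C}^\dagger\bm{g}(\bm{x})=\bm{g}(\bm{x})$ $\mu$-a.e.\ makes explicit a step the paper's proof leaves implicit in the phrase ``an instance of the matrix form of $K$ is obtained in the form of $\bm{G}$,'' which is a welcome tightening rather than a different approach.
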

\begin{proof}
If $\forall\bm{x}\in\mathcal{M},~\bm{g}(\bm{f}(\bm{x}))=G\bm{g}(\bm{x})$, then for any $\hat{g}=\sum_{i=1}^na_ig_i\in\operatorname{span}\{g_1,\dots,g_n\}$,
\begin{equation*}
\mathcal{K}\hat{g} = \sum_{i=1}^na_ig_i(\bm{f}(\bm{x})) = \sum_{j=1}^n\left(\sum_{i=1}^na_iG_{i,j}\right)g_j(\bm{x})\in\operatorname{span}\{g_1,\dots,g_n\},
\end{equation*}
where $G_{i,j}$ denotes the $(i,j)$-element of $\bm{G}$; thus, $\operatorname{span}\{g_1,\dots,g_n\}$ is a Koopman invariant subspace.
On the other hand, if $\{g_1,\dots,g_n\}$ spans a Koopman invariant subspace, there exists a linear operator $K$ such that $\forall\bm{x}\in\mathcal{M},~\bm{g}(\bm{f}(\bm{x}))=K\bm{g}(\bm{x})$; thus, $\int_\mathcal{M}(\bm{g}\circ\bm{f})\bm{g}^\ct\mathrm{d}\mu=\int_\mathcal{M}K\bm{g}\bm{g}^\ct\mathrm{d}\mu$. Therefore, an instance of the matrix form of $K$ is obtained in the form of $\bm{G}$.
\end{proof}%
According to Theorem~\ref{thm:main}, we should obtain $\bm{g}$ that makes $\bm{g}\circ\bm{f}-G\bm{g}$ zero. However, such problems cannot be solved with finite data because $\bm{g}$ is a function. Thus, we give the corresponding empirical risk minimization problem based on the assumption of ergodicity of $\bm{f}$ and the convergence property of the empirical matrix as follows.

\begin{assumption}\label{asmp:ergo}
For dynamical system~\eqref{eq:dynamics}, the time-average and space-average of a function $g:\mathcal{M}\to\mathbb{R}$ (or $\mathbb{C}$) coincide in $m\to\infty$ for almost all $\bm{x}_0\in\mathcal{M}$, i.e.,
\begin{equation*}
\lim_{m\to\infty}\frac1m\sum_{j=0}^{m-1}g(\bm{x}_j) = \int_\mathcal{M}g(\bm{x})\mathrm{d}\mu(\bm{x}),\quad\text{for almost all}~\bm{x}_0\in\mathcal{M}.
\end{equation*}%
\end{assumption}%
\begin{theorem}\label{thm:convergence}
Define $\bm{Y}_0$ and $\bm{Y}_1$ by Eq.~\eqref{eq:data} and suppose that Assumption~\ref{asmp:ergo} holds. If all modes are sufficiently excited in the data (i.e., $\operatorname{rank}(\bm{Y}_0)=n$), then matrix $\bm{A}=\bm{Y}_1\bm{Y}_0^\dagger$ almost surely converges to the matrix form of linear operator $G$ in $m\to\infty$.
\end{theorem}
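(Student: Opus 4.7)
My plan is to rewrite $\bm{A}=\bm{Y}_1\bm{Y}_0^\dagger$ in a form that exposes two empirical averages, apply Assumption~\ref{asmp:ergo} entrywise to each average, and then pass the limit through matrix inversion.

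First, I would use the rank hypothesis $\operatorname{rank}(\bm{Y}_0)=n$, which guarantees that $\bm{Y}_0$ has full row rank for $m$ large. Under that condition the Moore--Penrose pseudoinverse admits the closed form $\bm{Y}_0^\dagger=\bm{Y}_0^\ct(\bm{Y}_0\bm{Y}_0^\ct)^{-1}$, so
\begin{equation*}
\bm{A} \;=\; \bm{Y}_1\bm{Y}_0^\ct(\bm{Y}_0\bm{Y}_0^\ct)^{-1} \;=\; \left(\tfrac{1}{m}\bm{Y}_1\bm{Y}_0^\ct\right)\!\left(\tfrac{1}{m}\bm{Y}_0\bm{Y}_0^\ct\right)^{-1},
\end{equation*}
where the $1/m$ factors cancel exactly. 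Writing out each product as a sum of outer products gives
\begin{equation*}
\tfrac{1}{m}\bm{Y}_1\bm{Y}_0^\ct = \tfrac{1}{m}\sum_{j=0}^{m-1}\bm{g}(\bm{f}(\bm{x}_j))\bm{g}(\bm{x}_j)^\ct, \qquad \tfrac{1}{m}\bm{Y}_0\bm{Y}_0^\ct = \tfrac{1}{m}\sum_{j=0}^{m-1}\bm{g}(\bm{x}_j)\bm{g}(\bm{x}_j)^\ct.
\end{equation*}

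Next, I would apply Assumption~\ref{asmp:ergo} entry by entry. Each scalar entry of $\bm{g}(\bm{f}(\bm{x}))\bm{g}(\bm{x})^\ct$ and $\bm{g}(\bm{x})\bm{g}(\bm{x})^\ct$ is a (square-integrable) scalar function on $\mathcal{M}$, so ergodicity yields almost-sure convergence of the empirical time-averages to the corresponding spatial integrals. Consequently, almost surely in the initial condition,
\begin{equation*}
\tfrac{1}{m}\bm{Y}_1\bm{Y}_0^\ct \longrightarrow \int_\mathcal{M}(\bm{g}\circ\bm{f})\bm{g}^\ct\,\mathrm{d}\mu, \qquad \tfrac{1}{m}\bm{Y}_0\bm{Y}_0^\ct \longrightarrow \int_\mathcal{M}\bm{g}\bm{g}^\ct\,\mathrm{d}\mu.
\end{equation*}

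The subtle step — and the main obstacle — is turning the inverse of the empirical Gram matrix into the pseudoinverse appearing in $\bm{G}$. The rank hypothesis together with ergodicity implies that the limiting Gram matrix $\int_\mathcal{M}\bm{g}\bm{g}^\ct\,\mathrm{d}\mu$ is positive definite (otherwise $\bm{Y}_0\bm{Y}_0^\ct/m$ could not stay non-singular as $m\to\infty$), hence invertible, so its Moore--Penrose pseudoinverse coincides with its ordinary inverse. Since matrix inversion is continuous on the open set of invertible matrices, I conclude
\begin{equation*}
\bm{A} \longrightarrow \left(\int_\mathcal{M}(\bm{g}\circ\bm{f})\bm{g}^\ct\,\mathrm{d}\mu\right)\!\left(\int_\mathcal{M}\bm{g}\bm{g}^\ct\,\mathrm{d}\mu\right)^{\!\dagger} = \bm{G} \quad\text{almost surely},
\end{equation*}
which is the matrix form of $G$ as defined in Theorem~\ref{thm:main}.
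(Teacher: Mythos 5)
Your proof is correct and takes essentially the same route as the paper's: rewrite $\bm{A}$ as $\bigl(\tfrac1m\bm{Y}_1\bm{Y}_0^\ct\bigr)\bigl(\tfrac1m\bm{Y}_0\bm{Y}_0^\ct\bigr)^{-1}$, apply the ergodic average entrywise to both Gram matrices, and pass the limit through the (pseudo)inverse — the only cosmetic difference is that you use the ordinary inverse via the full-row-rank formula and continuity of inversion, whereas the paper keeps the pseudoinverse and cites a continuity result for pseudoinverses under rank preservation. One small caveat: your parenthetical argument that the limiting Gram matrix must be positive definite because ``otherwise $\bm{Y}_0\bm{Y}_0^\ct/m$ could not stay non-singular'' is not valid as stated (a sequence of non-singular matrices can converge to a singular limit, e.g.\ $m^{-1}\bm{I}$); the nondegeneracy of $\int_\mathcal{M}\bm{g}\bm{g}^\ct\,\mathrm{d}\mu$ really needs to be read into the ``all modes sufficiently excited'' hypothesis, which is exactly the same unverified point the paper's own proof relies on.
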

\begin{proof}
From Assumption~\ref{asmp:ergo}, $\frac1m\bm{Y}_1\bm{Y}_0^\ct$ and $\frac1m\bm{Y}_0\bm{Y}_0^\ct$ respectively converge to $\int_\mathcal{M}(\bm{g}\circ\bm{f})\bm{g}^\ct\mathrm{d}\mu$ and $\int_\mathcal{M}\bm{g}\bm{g}^\ct\mathrm{d}\mu$ for almost all $\bm{x}_0\in\mathcal{M}$. In addition, since the rank of $\bm{Y}_0\bm{Y}_0^\ct$ is always $n$, $(\frac1m\bm{Y}_0\bm{Y}_0^\ct)^\dagger$ converges to $(\int_\mathcal{M}\bm{g}\bm{g}^\ct\mathrm{d}\mu)^\dagger$ in $m\to\infty$ \cite{Rakocevic97}. Consequently, in $m\to\infty$, $\bm{A}=\left(\frac1m\bm{Y}_1\bm{Y}_0^\ct\right)\left(\frac1m\bm{Y}_0\bm{Y}_0^\ct\right)^\dagger$ almost surely converges to $\bm{G}$, which is the matrix form of linear operator $G$.
\end{proof}%

Since $\bm{A}=\bm{Y}_1\bm{Y}_0^\dagger$ is the minimum-norm solution of the linear least-squares regression from the columns of $\bm{Y}_0$ to those of $\bm{Y}_1$, we constitute the learning problem to estimate a set of function that transforms the original data into a form in which the linear least-squares regression fits well. In particular, we minimize RSS, which measures the discrepancy between the data and the estimated regression model (i.e., linear least-squares in this case). We define the {\em RSS loss} as follows:
\begin{equation}\label{eq:rssloss}
\mathcal{L}_\text{RSS}(\bm{g};\,(\bm{x}_0,\dots,\bm{x}_m)) = \left\Vert \bm{Y}_1 - (\bm{Y}_1\bm{Y}_0^\dagger)\bm{Y}_0 \right\Vert_\mathrm{F}^2,
\end{equation}
which becomes zero when $\bm{g}$ spans a Koopman invariant subspace. If we implement a smooth parametric model on $\bm{g}$, the local minima of $\mathcal{L}_\text{RSS}$ can be found using gradient descent. We adopt $\bm{g}$ that achieves a local minimum of $\mathcal{L}_\text{RSS}$ as a set of observables that spans (approximately) a Koopman invariant subspace.


\subsection{Linear delay embedder for state space reconstruction}\label{subsec:embed}

In the previous subsection, we have presented an important part of the principle of LKIS, i.e., minimization of the RSS of linear least-squares regression. Note that, to define RSS loss~\eqref{eq:rssloss}, we need access to a sequence of the original states, i.e., $(\bm{x}_0,\dots,\bm{x}_m)\in\mathcal{M}^{m+1}$, as a dataset. In practice, however, we cannot necessarily observe full states $\bm{x}$ due to limited memory and sensor capabilities. In this case, only transformed (and possibly degenerated) measurements are available, which we denote $\bm{y}=\bm\psi(\bm{x})$ with a measurement function $\bm\psi:\mathcal{M}\to\mathbb{R}^r$. To define RSS loss~\eqref{eq:rssloss} given only degenerated measurements, we must reconstruct the original states $\bm{x}$ from the actual observations $\bm{y}$.

Here, we utilize {delay-coordinate embedding}, which has been widely used for state space reconstruction in the analysis of nonlinear dynamics. Consider a univariate time-series $(\dots,y_{t-1},y_t,y_{t+1},\dots)$, which is a sequence of degenerated measurements $y_t=\psi(\bm{x}_t)$. According to the well-known Taken's theorem \cite{Takens81,Sauer91}, a faithful representation of $\bm{x}_t$ that preserves the structure of the state space can be obtained by $\tilde{\bm{x}}_t=\begin{bmatrix}y_t&y_{t-\tau}&\cdots&y_{t-(d-1)\tau}\end{bmatrix}^\tr$ with some lag parameter $\tau$ and embedding dimension $d$ if $d$ is greater than $2\dim(\bm{x})$. For a multivariate time-series, embedding with non-uniform lags provides better reconstruction \cite{Garcia05}. For example, when we have a two-dimensional time-series $\bm{y}_t=\begin{bmatrix}y_{1,t}&y_{2,t}\end{bmatrix}^\tr$, an embedding with non-uniform lags is similar to $\tilde{\bm{x}}_t=\begin{bmatrix}y_{1,t}&y_{1,t-\tau_{11}}&\cdots&y_{1,t-\tau_{1d_1}}&y_{2,t}&y_{2,t-\tau_{21}}&\cdots&y_{2,t-\tau_{2d_2}}\end{bmatrix}^\tr$ with each value of $\tau$ and $d$. Several methods have been proposed for selection of $\tau$ and $d$ \cite{Garcia05,Hirata06,Vlachos10}; however, appropriate values may depend on the given application (attractor inspection, prediction, etc.).

In this paper, we propose to surrogate the parameter selection of the delay-coordinate embedding by learning a {\em linear delay embedder} from data. Formally, we learn embedder $\bm\phi$ such that
\begin{equation}
\tilde{\bm{x}}_t=\bm\phi(\bm{y}^{(k)}_t)=\bm{W}_\phi\begin{bmatrix}\bm{y}_t^\tr&\bm{y}_{t-1}^\tr&\cdots&\bm{y}_{t-k+1}^\tr\end{bmatrix}^\tr,\quad
\bm{W}_\phi\in\mathbb{R}^{p \times kr},
\end{equation}
where $p=\dim(\tilde{\bm{x}})$, $r=\dim(\bm{y})$, and $k$ is a hyperparameter of maximum lag. We estimate weight $\bm{W}_\phi$ as well as the parameters of $\bm{g}$ by minimizing RSS loss~\eqref{eq:rssloss}, which is now defined using $\tilde{\bm{x}}$ instead of $\bm{x}$. Learning $\bm\phi$ from data yields an embedding that is suitable for learning a Koopman invariant subspace. Moreover, we can impose L1 regularization on weight $\bm{W}_\phi$ to make it highly interpretable if necessary according to the given application.


\subsection{Reconstruction of original measurements}\label{subsec:reconst}

Simple minimization of $\mathcal{L}_\text{RSS}$ may yield trivial $\bm{g}$, such as constant values. We should impose some constraints to prevent such trivial solutions. In the proposed framework, modal decomposition is first obtained in terms of learned observables $\bm{g}$; thus, the values of $\bm{g}$ must be back-projected to the space of the original measurements $\bm{y}$ to obtain a physically meaningful representation of the dynamic modes. Therefore, we modify the loss function by employing an additional term such that the original measurements $\bm{y}$ can be reconstructed from the values of $\bm{g}$ by a {\em reconstructor} $\bm{h}$, i.e., $\bm{y}\approx\bm{h}(\bm{g}(\tilde{\bm{x}}))$. Such term is given as follows:
\begin{equation}\label{eq:reconst}
\mathcal{L}_\text{rec}(\bm{h},\bm{g};\,(\tilde{\bm{x}}_0,\dots,\tilde{\bm{x}}_m)) = \sum_{j=0}^m\left\Vert\bm{y}_j-\bm{h}(\bm{g}(\tilde{\bm{x}}_j))\right\Vert^2,
\end{equation}
and, if $\bm{h}$ is a smooth parametric model, this term can also be reduced using gradient descent. Finally, the objective function to be minimized becomes
\begin{equation}
\mathcal{L}(\bm\phi,\bm{g},\bm{h};\,(\bm{y}_0,\dots,\bm{y}_m))
=
\mathcal{L}_\text{RSS}(\bm{g},\bm\phi;\,(\tilde{\bm{x}}_{k-1},\dots,\tilde{\bm{x}}_m))
+
\alpha\mathcal{L}_\text{rec}(\bm{h},\bm{g};\,(\tilde{\bm{x}}_{k-1},\dots,\tilde{\bm{x}}_m)),
\end{equation}
where $\alpha$ is a parameter that controls the balance between $\mathcal{L}_\text{RSS}$ and $\mathcal{L}_\text{rec}$.


\subsection{Implementation using neural networks}\label{subsec:impl}

\begin{figure}[t]
\centering
\includegraphics[width=0.98\textwidth]{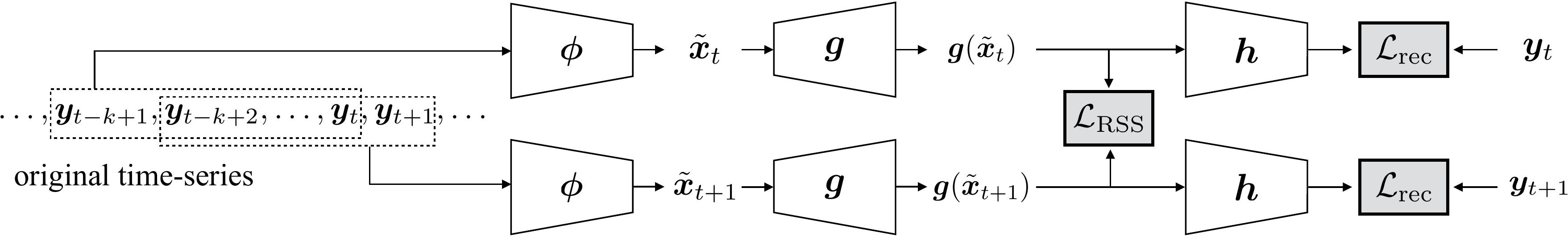}
\caption{An instance of LKIS framework, in which $\bm{g}$ and $\bm{h}$ are implemented by MLPs.}
\label{fig:diagram}
\figvspace
\end{figure}

In Sections~\ref{subsec:rss}--\ref{subsec:reconst}, we introduced the main concepts for the LKIS framework, i.e., RSS loss minimization, learning the linear delay embedder, and reconstruction of the original measurements. Here, we demonstrate an implementation of the LKIS framework using neural networks.

Figure~\ref{fig:diagram} shows a schematic diagram of the implementation of the framework. We model $\bm{g}$ and $\bm{h}$ using multi-layer perceptrons (MLPs) with a parametric ReLU activation function \cite{He15}. Here, the sizes of the hidden layer of MLPs are defined by the arithmetic means of the sizes of the input and output layers of the MLPs. Thus, the remaining tunable hyperparameters are $k$ (maximum delay of $\bm\phi$), $p$ (dimensionality of $\tilde{\bm{x}}$), and $n$ (dimensionality of $\bm{g}$). To obtain $\bm{g}$ with dimensionality much greater than that of the original measurements, we found that it was useful to set $k>1$ even when full-state measurements (e.g., $\bm{y}=\bm{x}$) were available.

After estimating the parameters of $\bm\phi$, $\bm{g}$, and $\bm{h}$, DMD can be performed normally by using the values of the learned $\bm{g}$, defining the data matrices in Eq.~\eqref{eq:data}, and computing the eigendecomposition of $\bm{A}=\bm{Y}_1\bm{Y}_0^\dagger$; the dynamic modes are obtained by $\bm{w}$, and the values of the eigenfunctions are obtained by $\varphi=\bm{z}^\ct\bm{g}$, where $\bm{w}$ and $\bm{z}$ are the right- and left-eigenvectors of $\bm{A}$. See Section~\ref{subsec:dmd} for details.

In the numerical experiments described in Sections~\ref{sec:example} and \ref{sec:application}, we performed optimization using first-order gradient descent. To stabilize optimization, batch normalization \cite{Ioffe15} was imposed on the inputs of hidden layers. Note that, since RSS loss function~\eqref{eq:rssloss} is {\em not} decomposable with regard to data points, convergence of stochastic gradient descent (SGD) cannot be shown straightforwardly. However, we empirically found that the non-decomposable RSS loss was often reduced successfully, even with mini-batch SGD. Let us show an example; the full-batch RSS loss (denoted $\mathcal{L}^\star_\text{RSS}$) under the updates of the mini-batch SGD are plotted in the rightmost panel of Figure~\ref{fig:fhn}. Here, $\mathcal{L}^\star_\text{RSS}$ decreases rapidly and remains small. For SGD on non-decomposable losses, Kar~\etal~\cite{Kar14} provided guarantees for some cases; however, examining the behavior of more general non-decomposable losses under mini-batch updates remains an open problem.


\section{Related work}\label{sec:related}

The proposed framework is motivated by the operator-theoretic view of nonlinear dynamical systems. In contrast, learning a generative (state-space) model for nonlinear dynamical systems directly has been actively studied in machine learning and optimal control communities, on which we mention a few examples.
A classical but popular method for learning nonlinear dynamical systems is using an expectation-maximization algorithm with Bayesian filtering/smoothing (see, e.g., \cite{Ghahramani99}). Recently, using approximate Bayesian inference with the variational autoencoder (VAE) technique \cite{Kingma14} to learn generative dynamical models has been actively researched. Chung~\etal~\cite{Chung15} proposed a recurrent neural network with random latent variables, Gao~\etal~\cite{Gao16} utilized VAE-based inference for neural population models, and Johnson~\etal~\cite{Johnson16} and Krishnan~\etal~\cite{Krishnan17} developed inference methods for structured models based on inference with a VAE. In addition, Karl~\etal~\cite{Karl17} proposed a method to obtain a more consistent estimation of nonlinear state space models. Moreover, Watter~\etal~\cite{Watter15} proposed a similar approach in the context of optimal control.
Since generative models are intrinsically aware of process and observation noises, incorporating methodologies developed in such studies to the operator-theoretic perspective is an important open challenge to explicitly deal with uncertainty.

We would like to mention some studies closely related to our method. After the first submission of this manuscript (in May 2017), several similar approaches to learning data transform for Koopman analysis have been proposed \cite{Li17,Yeung17,Mardt17,Otto17,Lusch17}. The relationships and relative advantages of these methods should be elaborated in the future.


\section{Numerical examples}\label{sec:example}

In this section, we provide numerical examples of DMD based on the LKIS framework (LKIS-DMD) implemented using neural networks. We conducted experiments on three typical nonlinear dynamical systems: a fixed-point attractor, a limit-cycle attractor, and a system with multiple basins of attraction. We show the results of comparisons with other recent DMD algorithms, i.e., Hankel DMD \cite{Arbabi16,Susuki15}, extended DMD \cite{Williams15a}, and DMD with reproducing kernels \cite{Kawahara16}. The detailed setups of the experiments discussed in this section and the next section are described in the \supplementary.

\paragraph{Fixed-point attractor}
Consider a two-dimensional nonlinear map on $\bm{x}_t=\begin{bmatrix}x_{1,t} & x_{2,t}\end{bmatrix}^\tr$:
\begin{equation}\label{eq:toy1}
x_{1,t+1} = \lambda x_{1,t},\quad
x_{2,t+1} = \mu x_{2,t} + (\lambda^2-\mu) x_{1,t}^2,
\end{equation}
which has a stable equilibrium at the origin if $\lambda,\mu<1$. The Koopman eigenvalues of system~\eqref{eq:toy1} include $\lambda$ and $\mu$, and the corresponding eigenfunctions are $\varphi_\lambda(\bm{x})=x_1$ and $\varphi_\mu(\bm{x})=x_2-x_1^2$, respectively. $\lambda^i\mu^j$ is also an eigenvalue with corresponding eigenfunction $\varphi_\lambda^i\varphi_\mu^j$. A minimal Koopman invariant subspace of system~\eqref{eq:toy1} is $\operatorname{span}\{x_1, x_2, x_1^2\}$, and the eigenvalues of the Koopman operator restricted to such subspace include $\lambda$, $\mu$ and $\lambda^2$. We generated a dataset using system~\eqref{eq:toy1} with $\lambda=0.9$ and $\mu=0.5$ and applied LKIS-DMD ($n=4$), linear Hankel DMD \cite{Arbabi16,Susuki15} (delay 2), and DMD with basis expansion by $\{x_1, x_2, x_1^2\}$, which corresponds to extended DMD \cite{Williams15a} with a right and minimal observable dictionary. The estimated Koopman eigenvalues are shown in Figure~\ref{fig:toy1_1}, wherein LKIS-DMD successfully identifies the eigenvalues of the target invariant subspace. In Figure~\ref{fig:toy1_2}, we show eigenvalues estimated using data contaminated with white Gaussian observation noise ($\sigma=0.1$). The eigenvalues estimated by LKIS-DMD coincide with the true values even with the observation noise, whereas the results of DMD with basis expansion (i.e., extended DMD) are directly affected by the observation noise.

\begin{figure}[t]
\centering
\begin{minipage}[t][][b]{0.485\textwidth}
\centering
\subfloat{\includegraphics[height=2.8cm,valign=t]{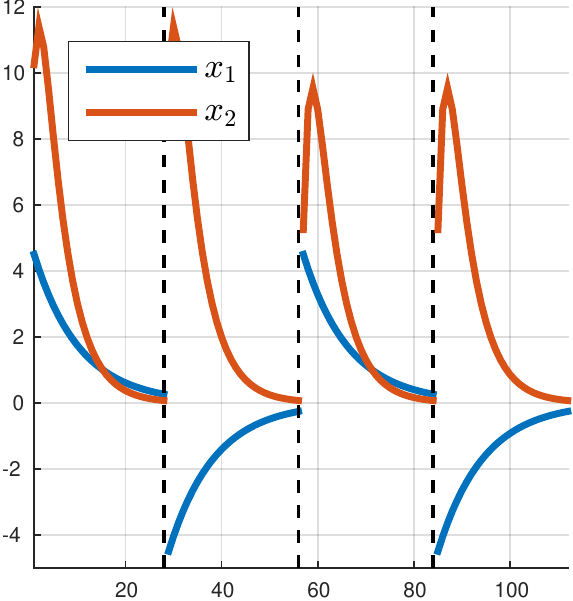}}\hspace{2mm}
\subfloat{\includegraphics[height=3cm,valign=t]{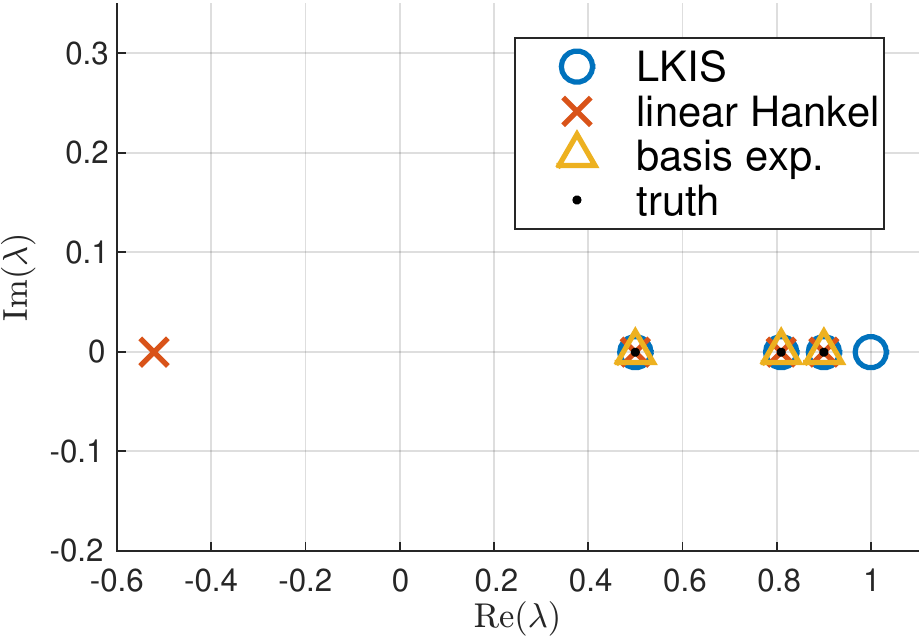}}
\caption{({\em left}) Data generated from system~\eqref{eq:toy1} and ({\em right}) the estimated Koopman eigenvalues. While linear Hankel DMD produces an inconsistent eigenvalue, LKIS-DMD successfully identifies $\lambda$, $\mu$, $\lambda^2$, and $\lambda^0\mu^0=1$.}
\label{fig:toy1_1}
\end{minipage}
\hfill
\begin{minipage}[t][][b]{0.485\textwidth}
\centering
\subfloat{\includegraphics[height=2.8cm,valign=t]{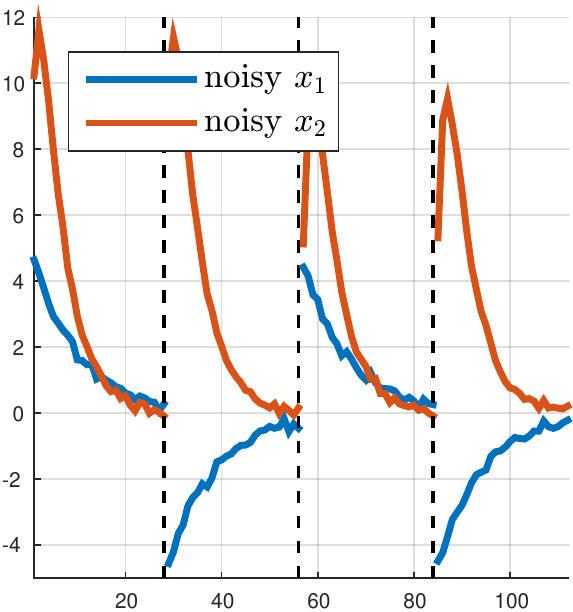}}\hspace{2mm}
\subfloat{\includegraphics[height=3cm,valign=t]{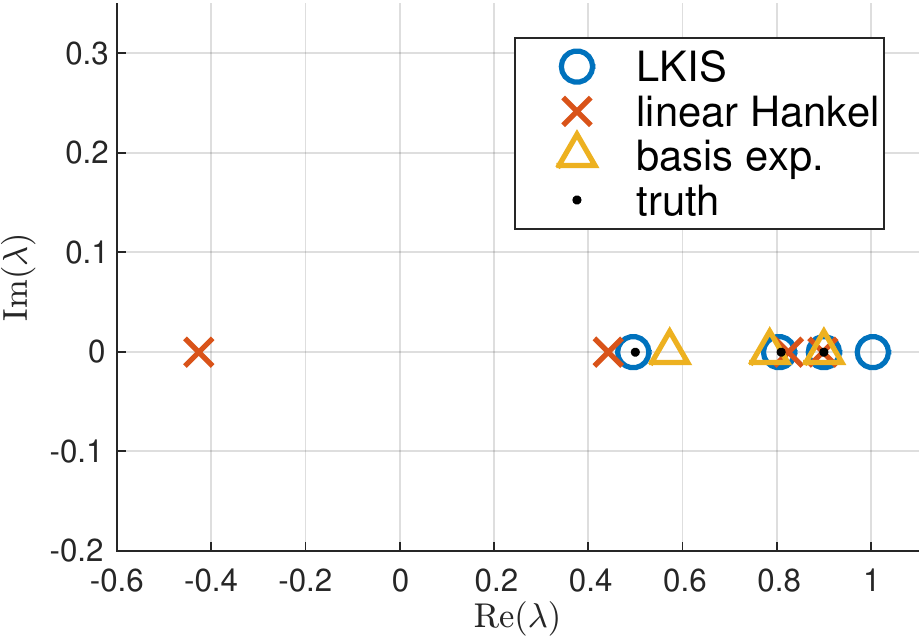}}
\caption{({\em left}) Data generated from system~\eqref{eq:toy1} and white Gaussian observation noise and ({\em right}) the estimated Koopman eigenvalues. LKIS-DMD successfully identifies the eigenvalues even with the observation noise.}
\label{fig:toy1_2}
\end{minipage}
\figvspace
\end{figure}

\paragraph{Limit-cycle attractor}
We generated data from the limit cycle of the FitzHugh--Nagumo equation
\begin{equation}
\dot{x_1}=x_1^3/3+x_1-x_2+I,\quad
\dot{x_2}=c(x_1-bx_2+a),
\end{equation}
where $a=0.7$, $b=0.8$, $c=0.08$, and $I=0.8$. Since trajectories in a limit-cycle are periodic, the (discrete-time) Koopman eigenvalues should lie near the unit circle. Figure~\ref{fig:fhn} shows the eigenvalues estimated by LKIS-DMD ($n=16$), linear Hankel DMD \cite{Arbabi16,Susuki15} (delay 8), and DMDs with reproducing kernels \cite{Kawahara16} (polynomial kernel of degree 4 and RBF kernel of width 1). The eigenvalues produced by LKIS-DMD agree well with those produced by kernel DMDs, whereas linear Hankel DMD produces eigenvalues that would correspond to rapidly decaying modes.

\begin{figure}[t]
\centering
\subfloat{\includegraphics[height=2.4cm,valign=t]{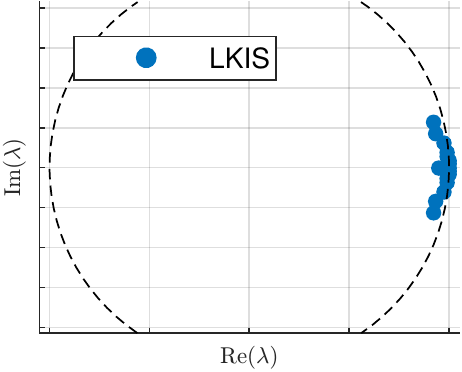}}\hfill
\subfloat{\includegraphics[height=2.4cm,valign=t]{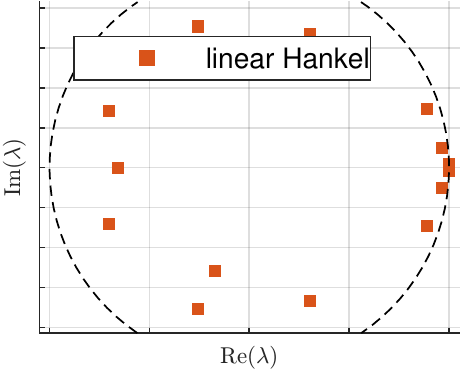}}\hfill
\subfloat{\includegraphics[height=2.4cm,valign=t]{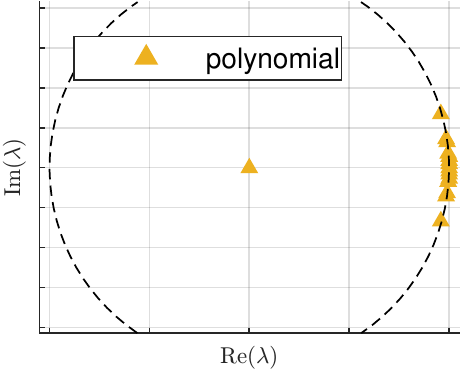}}\hfill
\subfloat{\includegraphics[height=2.4cm,valign=t]{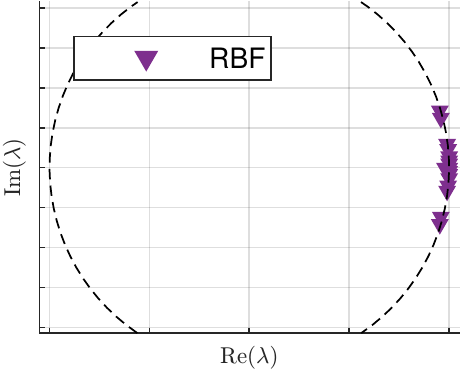}}
\hspace{3mm}
\subfloat{\includegraphics[height=2.4cm,valign=t]{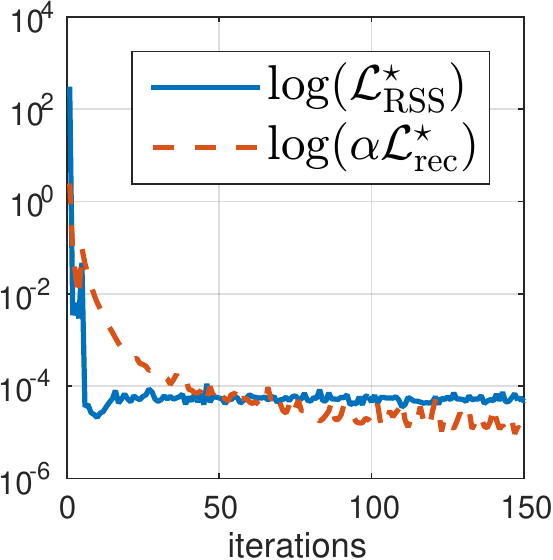}}
\caption{The left four panels show the estimated Koopman eigenvalues on the limit-cycle of the FitzHugh-Nagumo equation by LKIS-DMD, linear Hankel DMD, and kernel DMDs with polynomial and RBF kernels. The hyperparameters of each DMD are set to produce 16 eigenvalues. The rightmost plot shows the full-batch (size 2,000) loss under mini-batch (size 200) SGD updates along iterations. Non-decomposable part $\mathcal{L}^\star_\text{RSS}$ decreases rapidly and remains small, even by SGD.}
\label{fig:fhn}
\figvspace
\end{figure}

\paragraph{Multiple basins of attraction}
Consider the unforced Duffing equation
\begin{equation}\label{eq:duffing}
\ddot{x} = -\delta \dot{x} - x(\beta+\alpha x^2),\quad
\bm{x}=\begin{bmatrix}x&\dot{x}\end{bmatrix}^\tr,
\end{equation}
where $\alpha=1$, $\beta=-1$, and $\delta=0.5$. States $\bm{x}$ following \eqref{eq:duffing} evolve toward $\begin{bmatrix}1&0\end{bmatrix}^\tr$ or $\begin{bmatrix}-1&0\end{bmatrix}^\tr$ depending on which basin of attraction the initial value belongs to unless the initial state is on the stable manifold of the saddle. Generally, a Koopman eigenfunction whose continuous-time eigenvalue is zero takes a constant value in each basin of attraction \cite{Williams15a}; thus, the contour plot of such an eigenfunction shows the boundary of the basins of attraction. We generated 1,000 episodes of time-series starting at different initial values uniformly sampled from $[-2,2]^2$. The left plot in Figure~\ref{fig:duffing} shows the continuous-time Koopman eigenvalues estimated by LKIS-DMD ($n=100$), all of which correspond to decaying modes (i.e., negative real parts) and agree with the property of the data. The center plot in Figure~\ref{fig:duffing} shows the true basins of attraction of \eqref{eq:duffing}, and the right plot shows the estimated values of the eigenfunction corresponding to the eigenvalue of the smallest magnitude. The surface of the estimated eigenfunction agrees qualitatively with the true boundary of the basins of attractions, which indicates that LKIS-DMD successfully identifies the Koopman eigenfunction.

\begin{figure}
\centering
\subfloat{\includegraphics[height=2.8cm]{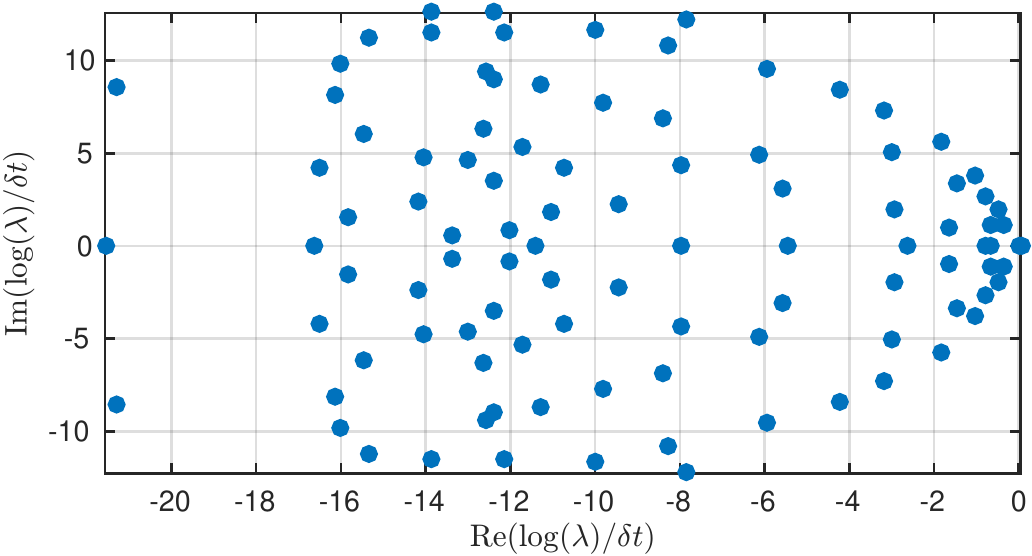}}
\hspace{2mm}
\subfloat{\includegraphics[height=2.8cm]{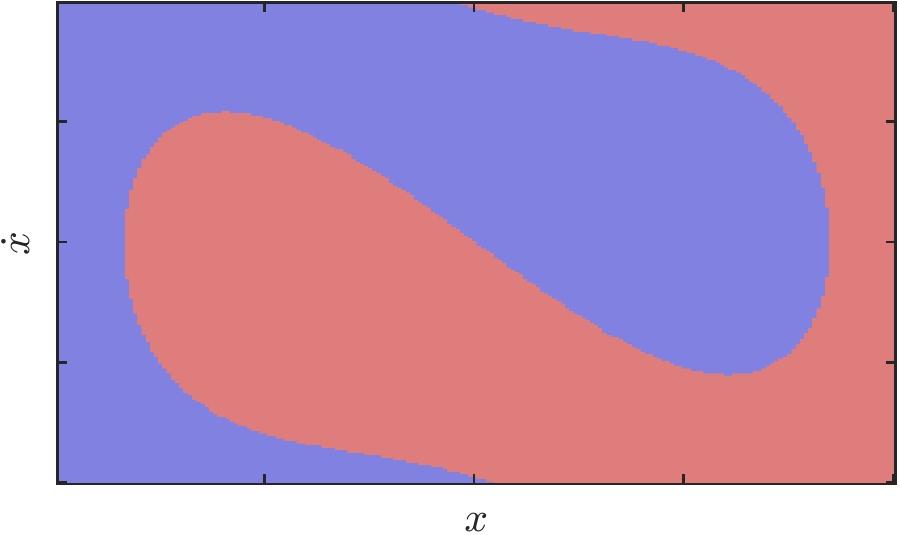}}
\hspace{2mm}
\subfloat{\includegraphics[height=2.8cm]{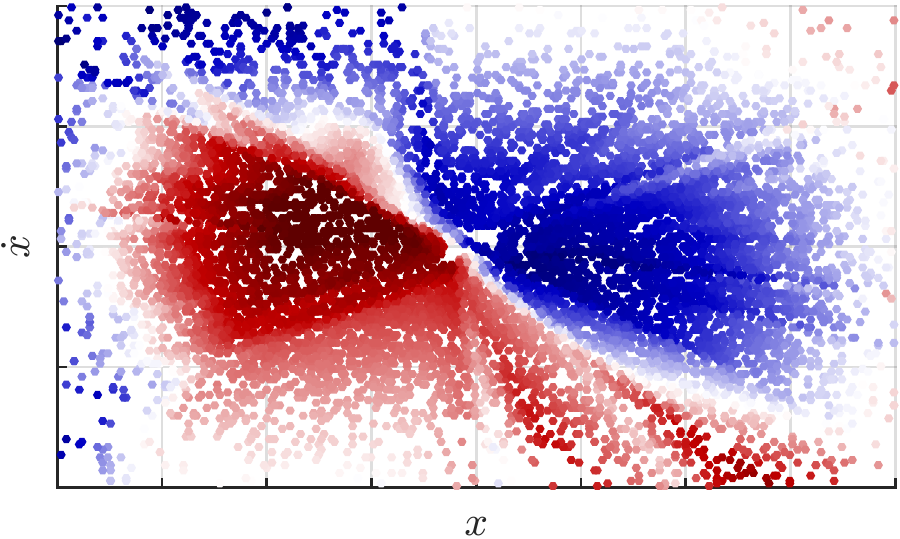}}
\caption{({\em left}) The continuous-time Koopman eigenvalues estimated by LKIS-DMD on the Duffing equation. ({\em center}) The true basins of attraction of the Duffing equation, wherein points in the blue region evolve toward $(1,0)$ and points in the red region evolve toward $(-1,0)$. Note that the stable manifold of the saddle point is not drawn precisely. ({\em right}) The values of the Koopman eigenfunction with a nearly zero eigenvalue computed by LKIS-DMD, whose level sets should correspond to the basins of attraction. There is rough agreement between the true boundary of the basins of attraction and the numerically computed boundary. The right two plots are best viewed in color.}
\label{fig:duffing}
\figvspace
\end{figure}


\section{Applications}\label{sec:application}

The numerical experiments in the previous section demonstrated the feasibility of the proposed method as a fully data-driven method for Koopman spectral analysis. Here, we introduce practical applications of LKIS-DMD.

\paragraph{Chaotic time-series prediction}
Prediction of a chaotic time-series has received significant interest in nonlinear physics. We would like to perform the prediction of a chaotic time-series using DMD, since DMD can be naturally utilized for prediction as follows. Since $\bm{g}(\bm{x}_t)$ is decomposed as $\sum_{i=1}^n\varphi_i(\bm{x}_t)\bm{c}_i$ and $\varphi$ is obtained by $\varphi_i(\bm{x}_t)=\bm{z}_i^\ct\bm{g}(\bm{x}_t)$ where $\bm{z}_i$ is a left-eigenvalue of $\bm{K}$, the next step of $\bm{g}$ can be described in terms of the current step, i.e., $\bm{g}(\bm{x}_{t+1})=\sum_{i=1}^n\lambda_i(\bm{z}_i^\ct\bm{g}(\bm{x}_t))\bm{c}_i$. In addition, in the case of LKIS-DMD, the values of $\bm{g}$ must be back-projected to $\bm{y}$ using the learned $\bm{h}$. We generated two types of univariate time-series by extracting the $\{x\}$ series of the Lorenz attractor \cite{Lorenz63} and the Rossler attractor \cite{Rossler76}. We simulated 25,000 steps for each attractor and used the first 10,000 steps for training, the next 5,000 steps for validation, and the last 10,000 steps for testing prediction accuracy. We examined the prediction accuracy of LKIS-DMD, a simple LSTM network, and linear Hankel DMD \cite{Arbabi16,Susuki15}, all of whose hyperparameters were tuned using the validation set. The prediction accuracy of every method and an example of the predicted series on the test set by LKIS-DMD are shown in Figure~\ref{fig:pred}. As can be seen, the proposed LKIS-DMD achieves the smallest root-mean-square (RMS) errors in the 30-step prediction.

\paragraph{Unstable phenomena detection}
One of the most popular applications of DMD is the investigation of the global characteristics of dynamics by inspecting the spatial distribution of the dynamic modes. In addition to the spatial distribution, we can investigate the temporal profiles of mode activations by examining the values of corresponding eigenfunctions. For example, assume there is an eigenfunction $\varphi_{\lambda\ll1}$ that corresponds to a discrete-time eigenvalue $\lambda$ whose magnitude is considerably smaller than one. Such a small eigenvalue indicates a rapidly decaying (i.e., unstable) mode; thus, we can detect occurrences of unstable phenomena by observing the values of $\varphi_{\lambda\ll1}$. We applied LKIS-DMD ($n=10$) to a time-series generated by a far-infrared laser, which was obtained from the Santa Fe Time Series Competition Data \cite{SantaFeBook}. We investigated the values of eigenfunction $\varphi_{\lambda\ll1}$ corresponding to the eigenvalue of the smallest magnitude. The original time-series and values of $\varphi_{\lambda\ll1}$ obtained by LKIS-DMD are shown in Figure~\ref{fig:detection}. As can be seen, the activations of $\varphi_{\lambda\ll1}$ coincide with sudden decays of the pulsation amplitudes. For comparison, we applied the novelty/change-point detection technique using one-class support vector machine (OC-SVM) \cite{Canu06} and direct density-ratio estimation by relative unconstrained least-squares importance fitting (RuLSIF) \cite{Liu13}. We computed AUC, defining the sudden decays of the amplitudes as the points to be detected, which were 0.924, 0.799, and 0.803 for LKIS, OC-SVM, and RuLSIF, respectively.

\begin{figure}[t]
\centering
\begin{minipage}[t]{0.46\textwidth}
\centering
\subfloat{\includegraphics[height=2.2cm,valign=t]{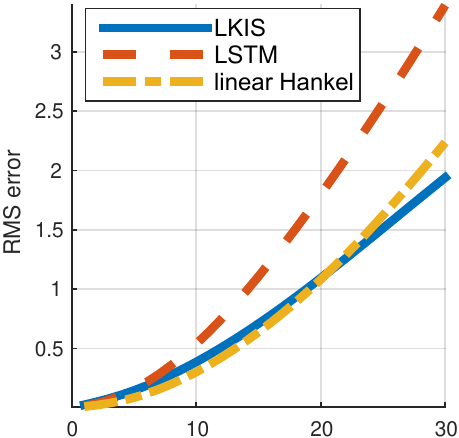}}\hspace{1.5mm}
\subfloat{\includegraphics[height=2.2cm,valign=t]{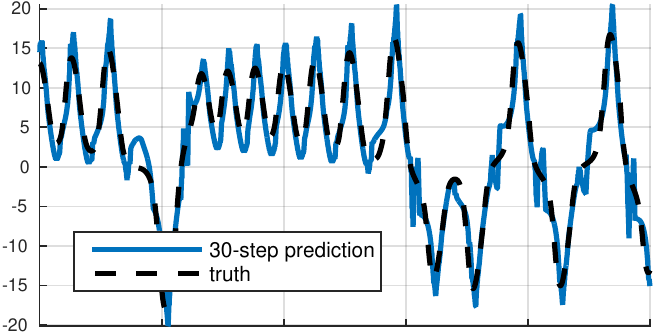}}
\\ 
\subfloat{\includegraphics[height=2.2cm,valign=t]{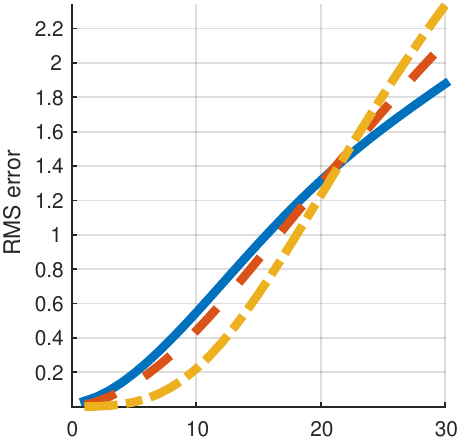}}\hspace{1.5mm}
\subfloat{\includegraphics[height=2.2cm,valign=t]{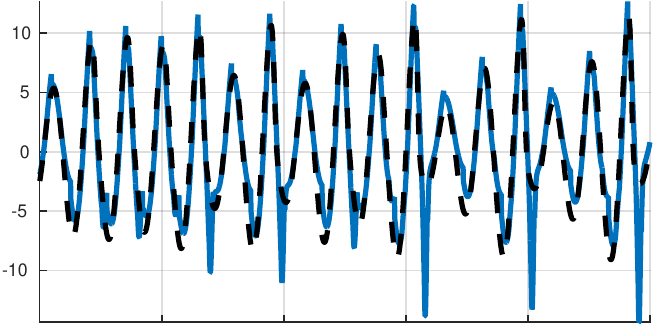}}
\caption{The left plot shows RMS errors from 1- to 30-step predictions, and the right plot shows a part of the 30-step prediction obtained by LKIS-DMD on ({\em upper}) the Lorenz-$x$ series and ({\em lower}) the Rossler-$x$ series.}
\label{fig:pred}
\end{minipage}
\hfill
\begin{minipage}[t]{0.51\textwidth}
\centering\vspace*{3.5mm}
\subfloat{\includegraphics[width=0.99\textwidth,valign=t]{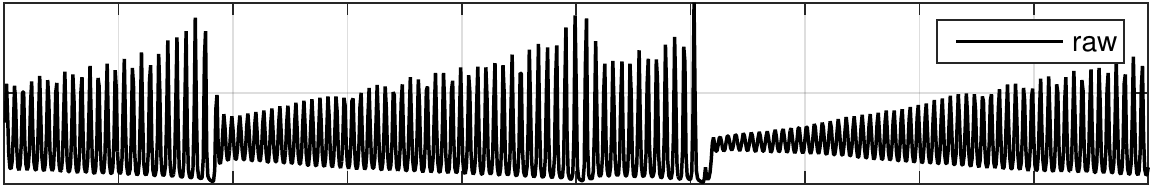}}\\ \vspace*{-0.8mm}
\subfloat{\includegraphics[width=0.99\textwidth,valign=t]{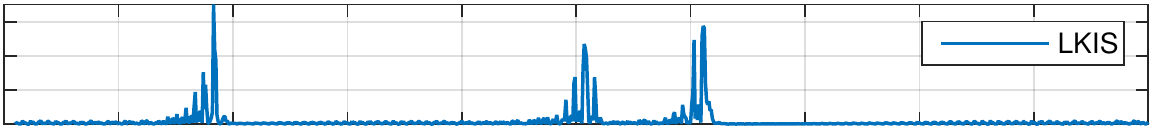}}\\ \vspace*{-0.8mm}
\subfloat{\includegraphics[width=0.99\textwidth,valign=t]{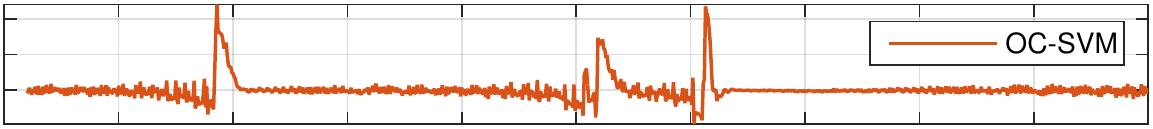}}\\ \vspace*{-0.8mm}
\subfloat{\includegraphics[width=0.99\textwidth,valign=t]{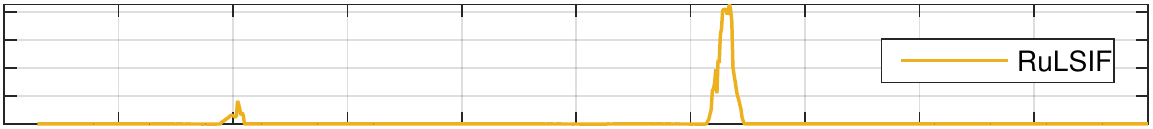}}
\caption{The top plot shows the raw time-series obtained by a far-infrared laser \cite{SantaFeBook}. The other plots show the results of unstable phenomena detection, wherein the peaks should correspond to the occurrences of unstable phenomena.}
\label{fig:detection}
\end{minipage}
\figvspace
\end{figure}


\section{Conclusion}

In this paper, we have proposed a framework for learning Koopman invariant subspaces, which is a fully data-driven numerical algorithm for Koopman spectral analysis. In contrast to existing approaches, the proposed method learns (approximately) a Koopman invariant subspace entirely from the available data based on the minimization of RSS loss. We have shown empirical results for several typical nonlinear dynamics and application examples.

We have also introduced an implementation using multi-layer perceptrons; however, one possible drawback of such an implementation is the local optima of the objective function, which makes it difficult to assess the adequacy of the obtained results. Rather than using neural networks, the observables to be learned could be modeled by a sparse combination of basis functions as in \cite{Brunton16c} but still utilizing optimization based on RSS loss. Another possible future research direction could be incorporating approximate Bayesian inference methods, such as VAE \cite{Kingma14}. The proposed framework is based on a discriminative viewpoint, but inference methodologies for generative models could be used to modify the proposed framework to explicitly consider uncertainty in data.


\subsubsection*{Acknowledgments}
This work was supported by JSPS KAKENHI Grant No. {JP15J09172, JP26280086, JP16H01548, and JP26289320}.

\bibliographystyle{unsrtnat}
{\bibliography{lkis}}



\end{document}